\documentclass{article}
\usepackage{enumitem}
\usepackage{algorithm}
\usepackage{algpseudocode}
\usepackage{graphicx}
\usepackage[round]{natbib}
\usepackage{subcaption}
\usepackage{amsmath}
\usepackage{amssymb}
\usepackage{amsthm}
\usepackage{hyperref}

\newtheorem{pro}{Proposition}

\algnewcommand{\LComment}[1]{\State $\triangleright$ #1}
\newcommand{\mathsc}[1]{\text{\normalfont\textsc{#1}}}
\newcommand{\dom}{\operatorname{dom}}

\title{MDP Planning as Policy Inference}

\author{David Tolpin,\\Offtopia\\ david.tolpin@gmail.com}

\begin{document}

\maketitle


\begin{abstract}
We formulate episodic Markov decision process (MDP) planning as
Bayesian inference over \emph{policies}. The primary contribution is
conceptual: the policy itself is treated as the latent variable, and
expected return defines an unnormalized posterior density over
policies. This preserves the standard expected-return objective, in
contrast to trajectory-centric planning-as-inference formulations that
introduce auxiliary optimality variables and to entropy-regularized
policy optimization methods that solve a different objective.

In the exact formulation, the posterior over deterministic policies
induces what we define here as an \emph{optimal stochastic policy under
preference uncertainty}, namely the stochastic policy induced by that
posterior. For discrete MDPs
with stochastic transitions, we study variational sequential Monte Carlo
(VSMC) as one approximate inference method for this posterior,
introducing policy consistency under state revisitation and coupled
transition randomness across particles.

Experiments on grid worlds, Blackjack, Triangle Tireworld, and Academic
Advising examine the consequences of inference over policies and compare
its induced behavior with entropy-regularized policy optimization. The
results support the view that MDP planning can be naturally cast as
Bayesian inference over policies.
\end{abstract}

\section{Introduction}

We cast episodic Markov decision process (MDP) planning as Bayesian
inference over \emph{policies}. The central claim of the paper is that
the natural object of inference in planning is the policy itself: not a
trajectory augmented with auxiliary optimality variables, and not a
single stochastic policy trained under entropy regularization, but a
posterior distribution over policies whose density is determined by
expected return. Treating the policy as a latent variable has two
benefits: it enables the use of general-purpose inference algorithms
for planning, and it makes uncertainty about optimal behavior explicit
as posterior dispersion rather than an artifact of approximation or
heuristic regularization.

Casting planning and control as probabilistic inference is useful because
it provides a principled modeling language for uncertainty and makes
general-purpose approximate inference tools available for decision
making. This perspective has a long history in probabilistic planning,
control-as-inference, maximum-entropy reinforcement learning, and active
inference \citep{DH97, TS06, Z10, L18}. Prior formulations typically
modify the classical planning objective to fit a standard
latent--observation template, for example by introducing
entropy-regularized or evidence-based surrogate criteria. In these
settings, stochasticity is often a modeling preference or an exploration
device, and uncertainty over the solution of the original
expected-return problem is not directly interpretable from the inferred
policy.

This distinction in the object of inference matters. In trajectory-centric
planning-as-inference methods, randomness is attached to latent
trajectories and auxiliary optimality variables are introduced to
recover a control objective. In entropy-regularized reinforcement
learning, stochasticity is built directly into the optimized policy and
reflects a modified objective. Here, by contrast, stochasticity arises
from posterior uncertainty over deterministic policies, while the
underlying objective remains the standard expected-return criterion.

In this work, we propose a Bayesian formulation that preserves the
standard MDP objective.  We define an unnormalized probability of
optimality for each policy that is monotone in its expected return,
yielding a posterior distribution whose modes coincide with
return-maximizing policies, while posterior dispersion represents
uncertainty over optimal behavior.  Acting is performed by sampling from
the posterior predictive distribution induced by this policy posterior.
This yields what we define in
Section~\ref{sec:prob-model-induced-policy} as the model-induced
optimal stochastic policy under preference uncertainty, through a
Thompson-sampling interpretation rather than through
entropy regularization.

Once planning is formulated this way, approximate Bayesian inference is
needed to work with the resulting posterior. In this paper we use
variational sequential Monte Carlo (VSMC) \citep{NLR+18,MLT+17} as one
approximate inference tool for discrete MDPs with stochastic
transitions. The algorithmic development is therefore in service of the
policy-inference formulation, rather than the main contribution of the
paper. Concretely, our adaptation enforces policy consistency across
revisited states and couples transition randomness across particles
within a sweep so that particle weights reflect policy differences
rather than independent realizations of simulator noise. We further
interpret the reward scale as controlling uncertainty over agent
preferences: larger rewards induce posterior concentration and
near-deterministic behavior, while smaller rewards yield a diffuse
posterior and a stochastic control policy that reflects preference
uncertainty through the inferred variational posterior. This
interpretation treats reward magnitudes as meaningful, not merely as an
ordering over policies: a small return difference expresses weak
preference between policies, whereas a large return difference expresses
strong preference and therefore induces stronger posterior concentration.

\paragraph{What this paper is and is not.}
This paper is primarily about the \emph{object of inference}. Our claim is
that MDP planning can be cast naturally as Bayesian inference over
\emph{policies} while preserving the standard expected-return semantics.
Accordingly, VSMC is used here as one approximate inference method for the
resulting posterior; it is not the main conceptual contribution, nor do we
present the method as a new entropy-regularized or ``soft'' planning
objective. In contrast to trajectory-centric planning-as-inference methods,
we do not introduce optimality variables or fictitious observations, and in
contrast to entropy-regularized RL we do not optimize a single stochastic
policy under an added entropy term.

Our contributions are:
\begin{itemize}
\item A formulation of episodic MDP planning as Bayesian inference over
\emph{policies}, in which the policy itself is the latent variable and
expected return defines an unnormalized posterior density. In the exact
formulation, this posterior induces what we define here as an
\textbf{optimal stochastic policy under preference uncertainty}, namely,
the stochastic policy obtained by marginalizing posterior uncertainty
over deterministic policies, without introducing trajectory-level
optimality variables or entropy regularization.
\item An adaptation of VSMC for inference over deterministic policies in
discrete MDPs with stochastic transitions, serving as an approximate
inference algorithm for this policy posterior and including policy
consistency under revisitation and coupled transition randomness across
particles.
\item An empirical evaluation of the \emph{induced stochastic control
policy} obtained by posterior predictive (Thompson-style) action
sampling, and a comparison to discrete Soft Actor-Critic across
diverse discrete benchmarks.
\end{itemize}

\section{Background}
\label{sec:background}

\subsection{Markov Decision Process}

We consider episodic MDPs with state space $S$, action space
$A$, stochastic transition kernel $\mathcal{T}(s' \mid s,a)$, reward
function $R(s,a,s')$, initial state $s_1$, and a set of absorbing
goal states $G \subseteq S$. A (Markov) policy $\pi$ maps states
to actions; we write either $\pi(s)\in A$ for a deterministic
policy or $p_\pi(a\mid s)$ for a stochastic policy. Episodes end
upon first reaching a state in $G$. This absorbing-goal formulation is
standard for episodic, goal-directed tasks, where interaction ends upon
reaching a terminal state and policies are evaluated by expected return
up to termination \citep{SB98,P94}. For planning and inference,
we use a finite rollout horizon $H$ as an algorithm parameter and
evaluate policies on truncated trajectories
$\tau_\pi=(s_1,a_1,s_2,\ldots,a_H,s_{H+1})$ generated by iterating,
for $t=1,\ldots,H$,
\begin{equation}
s_{t+1} \gets \mathsc{Step}(s_t,a_t), \qquad r_t \gets \mathsc{Reward}(s_t,a_t,s_{t+1}),
\label{eqn:mdp-simulator}
\end{equation}
where $\mathsc{Step}$ samples $s_{t+1}\sim \mathcal{T}(\cdot\mid s_t,a_t)$
and $\mathsc{Reward}$ computes $R(s_t,a_t,s_{t+1})$.
If $s_t \in G$, the process remains in that absorbing state and
accrues no further reward, so the truncated rollout continues only
for notational convenience. If no goal state is reached by time
$H$, the rollout is simply truncated. Throughout, inference
algorithms access the MDP only through this simulator interface.

\subsection{Variational Sequential Monte Carlo}
\label{sec:bg-vsmc}

Variational sequential Monte Carlo (VSMC)~\citep{NLR+18} is an
algorithm for variational inference in models amenable to sequential
Monte Carlo (SMC)~\citep{GSS93}. In standard SMC, the posterior
distribution of latent trajectories is approximated sequentially by
$N$ weighted particles. At each step, particles representing partial
latent trajectories are extended by sampling from a proposal kernel,
assigned incremental importance weights, and optionally resampled
according to those weights. An SMC sweep therefore produces two objects:
a weighted particle approximation to the posterior, and an estimate of
the marginal likelihood, or evidence. VSMC uses the latter object as a
variational training signal: it treats the parametrized SMC proposal as
a variational family and optimizes the expected SMC log-evidence
estimate, with the expectation taken over proposal sampling and
resampling in the SMC sweep.

Let $x_t\in\mathcal X_t$ be latent variables and
$y_t\in\mathcal Y_t$ be observations, with
$(\mathcal X_t,\mathcal Y_t)$ measurable spaces. Consider the
state-space model
\begin{equation}
p(x_{1:H},y_{1:H})
=
p(x_1)p(y_1\mid x_1)
\prod_{t=1}^{H-1}
p(x_{t+1}\mid x_t)p(y_{t+1}\mid x_{t+1})
\end{equation}
where $H$ is the horizon. The evidence is
\begin{equation}
Z = p(y_{1:H}).
\end{equation}
Here $q_\lambda$ denotes a parameterized family of proposal kernels on
the latent spaces, from which particles can be sampled sequentially and
whose densities or masses can be evaluated for importance weighting.
Given a proposal $q_\lambda$, SMC produces particles and weights
${x_{1:t}^{i},w_t^{i}}\big\rvert_{i=1}^{N}$ at each step $t$, and estimates
the evidence by
\begin{equation}
\hat Z_N
=
\prod_{t=1}^{H}
\frac{1}{N}\sum_{i=1}^{N} w_t^{(i)} .
\label{eqn:smc-Z-estimate}
\end{equation}
The estimator $\hat Z_N$ is unbiased for $Z$, while $\log \hat Z_N$ is a
biased estimate of $\log Z$ by Jensen's inequality. VSMC nevertheless
uses the log estimate as a variational objective,
\begin{equation}
\mathcal L_{\mathrm{VSMC}}(\lambda)
=
\mathbb E_{\mathrm{SMC}(q_\lambda)}
\left[
\log \hat Z_N
\right],
\label{eqn:vsmc-objective}
\end{equation}
where the expectation is over proposal sampling and resampling in the
SMC sweep. The objective is a surrogate evidence lower bound for the
marginal likelihood~\citep{NLR+18}.

The proposal learned by VSMC should be understood as an approximation to
a posterior conditional over the next latent variable, not merely as a
locally adapted proposal depending only on the next observation. This is
not a new result of the present paper; it is a reformulation of the
variational interpretation of SMC developed by \citet{NLR+18}. We state
it explicitly here to fix notation and to make clear which proposal
distribution is being approximated before specializing the construction
to policy inference.

\begin{pro}[Posterior-conditional target of the VSMC proposal]
\label{pro:vsmc-target-proposal}
Let $p(x_{1:H}\mid y_{1:H})$ be the posterior over a latent trajectory.
For fixed observations $y_{1:H}$, consider an autoregressive proposal
family
\begin{equation}
q_\lambda(x_{1:H}\mid y_{1:H})
=
q_\lambda(x_1\mid y_{1:H})
\prod_{t=1}^{H-1}
q_\lambda(x_{t+1}\mid x_{1:t},y_{1:H}).
\end{equation}
The maximizer of the variational evidence bound
\begin{equation}
\mathcal L(\lambda)
=
\mathbb E_{q_\lambda(x_{1:H}\mid y_{1:H})}
\left[
\log p(x_{1:H},y_{1:H})
-
\log q_\lambda(x_{1:H}\mid y_{1:H})
\right]
\end{equation}
satisfies, whenever the proposal family is expressive enough,
\begin{equation}
q_\lambda^\star(x_1\mid y_{1:H})
=
p(x_1\mid y_{1:H})
\end{equation}
and, for each $t=1,\ldots,H-1$,
\begin{equation}
q_\lambda^\star(x_{t+1}\mid x_{1:t},y_{1:H})
=
p(x_{t+1}\mid x_{1:t},y_{1:H}).
\label{eqn:vsmc-target-conditional}
\end{equation}
For the Markov state-space model above, this posterior conditional
reduces to
\begin{equation}
q_\lambda^\star(x_{t+1}\mid x_{1:t},y_{1:H})
=
p(x_{t+1}\mid x_t,y_{t+1:H}).
\end{equation}
\end{pro}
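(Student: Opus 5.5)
The plan is to rewrite the evidence bound as $\log Z$ minus a KL divergence and then apply the chain rule for KL to the autoregressive factorization of $q_\lambda$. Since $p(x_{1:H},y_{1:H}) = Z\,p(x_{1:H}\mid y_{1:H})$ with $Z=p(y_{1:H})$ independent of $\lambda$, we have
\begin{equation*}
\mathcal L(\lambda) = \log Z - \mathrm{KL}\bigl(q_\lambda(x_{1:H}\mid y_{1:H}) \,\big\|\, p(x_{1:H}\mid y_{1:H})\bigr).
\end{equation*}
Maximizing $\mathcal L$ is therefore equivalent to minimizing this KL. By Gibbs' inequality the KL is zero if and only if $q_\lambda = p(\cdot\mid y_{1:H})$ almost everywhere. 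When the family is expressive enough to contain the posterior, the maximizer is exactly the posterior.

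Next I factor the posterior autoregressively in the same order,
\begin{equation*}
p(x_{1:H}\mid y_{1:H}) = p(x_1\mid y_{1:H})\prod_{t=1}^{H-1} p(x_{t+1}\mid x_{1:t},y_{1:H}).
\end{equation*}
The chain rule for KL then decomposes the joint KL into two parts:
\begin{itemize}
\item the KL between the first-step marginals, $q_\lambda(x_1\mid y_{1:H})$ and $p(x_1\mid y_{1:H})$;
\item a sum over $t$ of the expected conditional KLs between $q_\lambda(x_{t+1}\mid x_{1:t},y_{1:H})$ and $p(x_{t+1}\mid x_{1:t},y_{1:H})$, with the expectation over $x_{1:t}\sim q_\lambda$.
\end{itemize}
Each term is nonnegative and the total is zero at the optimum, so every term vanishes. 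This gives the first identity and, for each $t$, the conditional identity \eqref{eqn:vsmc-target-conditional}. The conditional identity holds $q_\lambda$-almost surely in $x_{1:t}$; by induction on $t$, $q^\star_\lambda$-almost surely is the same as posterior-almost surely.

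The main subtlety is that the conditionals are pinned down only on the support of the preceding marginal. I will handle this with the expressiveness assumption: the family is taken to be rich enough that each conditional factor can be set independently. We can then choose the maximizer equal to the posterior conditional everywhere, and any maximizer agrees with it off a null set.

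Finally, for the Markov state-space model I compute $p(x_{t+1}\mid x_{1:t},y_{1:H})$ by Bayes' rule. It is proportional to
\begin{equation*}
p(x_{1:t},y_{1:t})\,p(x_{t+1}\mid x_t)\,p(y_{t+1}\mid x_{t+1})\,p(y_{t+2:H}\mid x_{t+1}),
\end{equation*}
where the last factor uses the Markov property: given $x_{t+1}$, future observations are independent of the past latents and observations. The factor $p(x_{1:t},y_{1:t})$ does not depend on $x_{t+1}$ and is absorbed into the normalizer. What remains is proportional to $p(x_{t+1}\mid x_t)\,p(y_{t+1:H}\mid x_{t+1})$, which normalizes to $p(x_{t+1}\mid x_t,y_{t+1:H})$. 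I expect no obstacle here beyond stating the conditional-independence structure carefully.
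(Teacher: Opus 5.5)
Your proposal is correct and follows essentially the same route as the paper: rewrite $\mathcal L(\lambda)$ as $\log p(y_{1:H})$ minus the KL divergence to the posterior, then apply the KL chain rule to the matching autoregressive factorizations and use that each term is nonnegative. Your Bayes-rule computation of $p(x_{t+1}\mid x_{1:t},y_{1:H})$ and your almost-sure caveats are a more detailed version of the paper's one-line conditional-independence argument for the Markov reduction.
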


\begin{proof}
The variational bound can be written as
\begin{equation}
\mathcal L(\lambda)
=
\log p(y_{1:H})
-
\mathrm{KL}
\left(
q_\lambda(x_{1:H}\mid y_{1:H})
\,\Vert\,
p(x_{1:H}\mid y_{1:H})
\right).
\end{equation}
The posterior factorizes by the chain rule as
\begin{equation}
p(x_{1:H}\mid y_{1:H})
=
p(x_1\mid y_{1:H})
\prod_{t=1}^{H-1}
p(x_{t+1}\mid x_{1:t},y_{1:H}).
\end{equation}
Substituting this factorization and the factorization of
$q_\lambda(x_{1:H}\mid y_{1:H})$ into the KL gives
\begin{align}
\mathrm{KL}
\left(
q_\lambda(x_{1:H}\mid y_{1:H})
\,\Vert\,
p(x_{1:H}\mid y_{1:H})
\right)
\
&=
\mathrm{KL}
\left(
q_\lambda(x_1\mid y_{1:H})
\,\Vert\,
p(x_1\mid y_{1:H})
\right)
\\ \nonumber
&+
\sum_{t=1}^{H-1}
\mathbb E_{q_\lambda(x_{1:t}\mid y_{1:H})}
\left[
\mathrm{KL}
\left(
q_\lambda(\cdot\mid x_{1:t},y_{1:H})
\,\Vert\,
p(\cdot\mid x_{1:t},y_{1:H})
\right)
\right].
\end{align}
Each term is nonnegative. Hence the optimum is attained by setting each
proposal factor equal to the corresponding posterior conditional.

It remains to simplify this conditional in the Markov case. Under the
state-space factorization above, for $t=1,\ldots,H-1$,
\begin{equation}
p(x_{t+1}\mid x_{1:t},y_{1:H})
=
p(x_{t+1}\mid x_t,y_{t+1:H}),
\end{equation}
because, conditional on $x_t$, the distribution of $x_{t+1}$ and the
future observations $y_{t+1:H}$ is independent of the earlier latent
states $x_{1:t-1}$ and observations $y_{1:t}$. This proves the final
statement.
\end{proof}

Thus, following the variational interpretation of SMC in
\citet{NLR+18}, the ideal proposal in a Markov model is the smoothing
conditional distribution of the next latent state given the current
latent state and the remaining observations. This differs from a myopic
proposal that conditions only on the next observation. VSMC is therefore
useful when the proposal can learn to predict latent choices that are
favored by future evidence, a property used below when adapting VSMC to
posterior inference over deterministic policies.

\section{Probabilistic Model}
\label{sec:prob-model}

Much of the control-as-inference literature introduces auxiliary
\emph{optimality} variables or other fictitious observations in order to
cast planning into a standard latent--observed graphical model.  Here we
avoid such augmentation and instead work directly with an unnormalized
target distribution over the object of interest---the policy.  Posterior
inference only requires access to an unnormalized density $\tilde p(\pi)$
(up to a multiplicative constant), possibly through an unbiased
stochastic estimator of $\tilde p(\pi)$~\citep{AR09} or $\log \tilde
p(\pi)$~\citep{HBW+13}.

Since we are interested in inferring a policy, the \emph{policy} $\pi$ is
the latent random variable. The objective of MDP planning is to identify
policies that maximize expected return. To align the probabilistic model
with this objective, we assign to each policy an unnormalized
\textbf{probability of optimality} that is monotone in its expected
return.

Specifically, we define the unnormalized log probability of a policy
as the \emph{expected return} obtained by the agent following
the policy over a truncated rollout of length at most $H$, with
expectation over trajectories $\tau_\pi$ distributed according to
the dynamics induced by policy $\pi$:
\begin{equation}
\log \tilde p(\pi) = \mathbb{E}_{\tau_\pi} \sum_{t=1}^{H} R\left(s_t, a_t, s_{t+1}\right),
\label{eqn:logp_pi}
\end{equation}
where $a_t = \pi(s_t)$ and $s_{t+1}\sim \mathcal{T}(\cdot\mid s_t,a_t)$
for $t=1,\ldots,H$. Because goal states are absorbing, this sum
coincides with the episode return when the goal is reached
before $H$, and otherwise uses the rollout truncation horizon as
a computational cutoff.  This induces a Boltzmann--Gibbs
distribution over \emph{policies} \citep{ZMB+08,T09,L18}.
More formally, this distribution is defined relative to a reference
measure $\mu$ on the policy space:
\begin{equation}
P(d\pi)=Z^{-1}\exp(J(\pi))\,\mu(d\pi),
\qquad
Z=\int \exp(J(\pi))\,\mu(d\pi),
\end{equation}
where $J(\pi)$ denotes the expected truncated return in
Eq.~\eqref{eqn:logp_pi}. This reference measure is not used here as part
of a prior--likelihood decomposition with fictitious observations; it is
only the base measure with respect to which the Gibbs density is defined.
In the finite deterministic-policy setting used in this paper, $\mu$ may
be taken to be counting measure, equivalently the uniform measure over
deterministic policies. Extensions beyond finite discrete policy spaces
require specifying a proper reference measure on the chosen measurable
policy class. With finite horizon and rewards bounded above, $Z<\infty$
for any finite reference measure.

Note that neither actions nor states are treated as Bayesian random
variables for which a posterior is sought. Although the policy (if
stochastic policies are considered) induces a stochastic action
selection rule and the environment induces stochastic state transitions,
these are generative rather than inferential sources of randomness:
actions and state transitions are sampled forward from their respective
distributions rather than conditioned for the purpose of posterior
inference. The randomness they induce propagates into the estimation of
the unnormalized log probability of the policy, so that
$\log \tilde p(\pi)$ is available only through noisy Monte Carlo
evaluations---by computing the return of a single truncated
rollout:
\begin{equation}
\log \widehat{\tilde p}(\pi) = \sum_{t=1}^{H} R\left(s_{t}, a_t, s_{t+1}\right).
\label{eqn:hat_logp_pi}
\end{equation}
Stochastic estimate~\eqref{eqn:hat_logp_pi} lays the basis for posterior
inference of the policy distribution.

\subsection{Posterior-Induced Stochastic Policy}
\label{sec:prob-model-induced-policy}

The posterior over deterministic policies also induces a stochastic
action-selection rule by marginalization. For any state $s$, define
\begin{equation}
p^\star(a\mid s)
=
\Pr_{\pi\sim p(\cdot)}[\pi(s)=a]
=
\mathbb{E}_{\pi\sim p(\cdot)}
\left[
\mathbf{1}\{\pi(s)=a\}
\right],
\label{eqn:posterior-predictive-policy}
\end{equation}
where $p(\pi)\propto \tilde p(\pi)$ is the posterior over deterministic
policies induced by Eq.~\eqref{eqn:logp_pi}. We refer to
$p^\star(\cdot\mid s)$ as the \textbf{optimal stochastic policy under
preference uncertainty}. This is a notion introduced in this paper: it
denotes the stochastic policy obtained by marginalizing the posterior
over deterministic policies under the model in Eq.~\eqref{eqn:logp_pi}.
Its action probabilities are therefore the posterior probabilities that
the corresponding actions are prescribed by posterior-supported
deterministic policies at state $s$.
Stochasticity is thus not introduced through entropy regularization;
it arises from posterior uncertainty over which deterministic policy
is preferred by the model.


\subsection{Worked Tabular Examples}
\label{sec:worked-tabular-examples}

We use two small tabular examples to clarify the relationship between the
policy-posterior construction, entropy-regularized control, and
optimality-variable control-as-inference.  The first example shows a setting in
which the policy-posterior construction and entropy-regularized control
coincide, while optimality-variable control-as-inference differs.  The second
example shows that the coincidence does not hold in general: when a state can
be revisited, a posterior over deterministic policies exhibits policy-level
commitment, whereas an entropy-regularized stochastic policy resamples actions
on each visit.  Throughout this subsection the entropy coefficient is set to
one.

\paragraph{One-step stochastic bandit.}

Consider a one-step decision problem with initial state $s_3$, actions
$a_1,a_2$, and terminal states $s_1,s_2$.  Action $a_i$ moves deterministically
to $s_i$, where a Bernoulli reward is obtained:
\begin{equation}
R_i =
\begin{cases}
1, & \text{with probability } r_i,\\
0, & \text{with probability } 1-r_i .
\end{cases}
\end{equation}
Thus the expected return of choosing $a_i$ is
\begin{equation}
J(a_i)=\mathbb{E}[R_i]=r_i .
\end{equation}
Let $\pi_i$ denote the deterministic policy that chooses $a_i$ at $s_3$.
Under the policy-posterior construction, the unnormalized density assigned to
$\pi_i$ is
\begin{equation}
\tilde p(\pi_i)=\exp{J(\pi_i)}=\exp(r_i).
\end{equation}
The posterior-induced action probabilities are therefore
\begin{equation}
p^\star(a_1\mid s_3)
=
\frac{\exp(r_1)}
{\exp(r_1)+\exp(r_2)}
\end{equation}
and
\begin{equation}
p^\star(a_2\mid s_3)
=
\frac{\exp(r_2)}
{\exp(r_1)+\exp(r_2)} .
\end{equation}

Entropy-regularized RL gives the same probabilities in this one-step setting.
Writing $p$ for the probability of choosing $a_1$, the entropy-regularized
objective is
\begin{equation}
\max_{p\in[0,1]}
\left[
p r_1 + (1-p)r_2
-
p\log p
-
(1-p)\log(1-p)
\right].
\end{equation}
The optimizer is
\begin{equation}
p_{\mathrm{ER}}(a_1\mid s_3)
=
\frac{\exp(r_1)}
{\exp(r_1)+\exp(r_2)}
=
p^\star(a_1\mid s_3),
\end{equation}
and similarly for $a_2$.  Thus, in a one-step bandit, the
policy-posterior construction and entropy-regularized RL coincide.

Optimality-variable control-as-inference differs in the presence of stochastic
rewards.  In that construction, the unnormalized weight of action $a_i$ is
obtained by exponentiating the realized reward and then taking expectation:
\begin{equation}
\mathbb{E}\left[\exp(R_i)\right]
=
r_i\exp(1)+(1-r_i)\exp(0)
=
r_i e+(1-r_i).
\end{equation}
Hence the corresponding action probability is
\begin{equation}
p_{\mathrm{CAI}}(a_1\mid s_3)
=
\frac{r_1e+(1-r_1)}
{r_1e+(1-r_1)+r_2e+(1-r_2)} .
\end{equation}
This is generally different from
\begin{equation}
\frac{\exp(r_1)}
{\exp(r_1)+\exp(r_2)}.
\end{equation}
The one-step bandit therefore separates the policy-posterior and
entropy-regularized objectives from optimality-variable
control-as-inference, while also showing that the former two need not differ in
the simplest setting.

The same calculation applies to any finite one-step bandit: for actions
$a_1,\ldots,a_K$ with expected rewards $\mu_i=\mathbb{E}[R_i]$, both the
policy-posterior construction and entropy-regularized RL assign
\begin{equation}
    p(a_i)
    =
    \frac{\exp(\mu_i)}
         {\sum_{j=1}^K \exp(\mu_j)} .
\end{equation}

\paragraph{Stochastic continuation.}

We next give an example in which the policy-posterior construction
and entropy-regularized RL differ.  Let $s_1$ be the initial state and let
$s_2$ be terminal.  There are two actions at $s_1$.  Action $a_1$ gives zero
reward and either returns to $s_1$ or terminates:
\begin{equation}
\mathcal{T}(s_1 \mid s_1,a_1) = \rho, \qquad \mathcal{T}(s_2\mid
s_1, a_1) = 1-\rho,\qquad R(s_1,a_1,s_1)=R(s_1,a_1,s_2)=0.
\end{equation}
where $0\leq \rho<1$.  Action $a_2$ terminates immediately and gives reward
one:
\begin{equation}
\mathcal{T}(s_2\mid s_1,a_2)=1,
\qquad
R(s_1,a_2,s_2)=1.
\end{equation}

There are two deterministic policies at $s_1$:
\begin{equation}
\pi_1(s_1)=a_1,
\qquad
\pi_2(s_1)=a_2.
\end{equation}
Under $\pi_1$, every realized trajectory receives total reward zero.  The
transition probability $\rho$ affects the number of visits to $s_1$, but it
does not affect the total reward.  Thus
\begin{equation}
J(\pi_1)=0.
\end{equation}
Under $\pi_2$, the process terminates immediately with reward one, so
\begin{equation}
J(\pi_2)=1.
\end{equation}
Thus
\begin{equation}
\tilde p(\pi_1)=\exp(0)=1,
\qquad
\tilde p(\pi_2)=\exp(1)=e.
\end{equation}
Therefore the posterior-induced action probabilities are
\begin{equation}
p^\star(a_1\mid s_1)
=
\Pr\nolimits_{\pi\sim p}[\pi(s_1)=a_1]
=
\frac{1}{1+e},
\end{equation}
and
\begin{equation}
p^\star(a_2\mid s_1)
=
\Pr\nolimits_{\pi\sim p}[\pi(s_1)=a_2]
=
\frac{e}{1+e}.
\end{equation}
These probabilities are independent of $\rho$.

Entropy-regularized RL gives a different stationary policy.  Let
$V_{\mathrm{ER}}(s_2)=0$, and write
\begin{equation}
V := V_{\mathrm{ER}}(s_1).
\end{equation}
The soft Bellman equation is
\begin{equation}
V
=
\log\left(
\exp(\rho V)
+
\exp(1)
\right),
\end{equation}
because the soft value after taking $a_1$ is returned to with probability
$\rho$, whereas $a_2$ terminates with reward one.  Let
\begin{equation}
x:=\exp(V).
\end{equation}
Then $x$ satisfies
\begin{equation}
x=x^\rho+e.
\end{equation}
The entropy-regularized action probabilities are
\begin{equation}
p_{\mathrm{ER}}(a_2\mid s_1)
=
\frac{\exp(1)}
{\exp(\rho V)+\exp(1)}
=
\frac{e}{x},
\end{equation}
and
\begin{equation}
p_{\mathrm{ER}}(a_1\mid s_1)
=
\frac{\exp(\rho V)}
{\exp(\rho V)+\exp(1)}
=
\frac{x^\rho}{x}.
\end{equation}
For $\rho=0$, the example reduces to the one-step case and the two policies
coincide.  For every $\rho>0$, however, $x^\rho>1$ and hence
\begin{equation}
x=x^\rho+e>1+e.
\end{equation}
Consequently,
\begin{equation}
p_{\mathrm{ER}}(a_2\mid s_1)
=
\frac{e}{x}
<
\frac{e}{1+e}
=
p^\star(a_2\mid s_1).
\end{equation}

For example, when $\rho=1/2$, the equation
\begin{equation}
x=x^{1/2}+e
\end{equation}
has solution
\begin{equation}
x=
\left(
\frac{1+\sqrt{1+4e}}{2}
\right)^2.
\end{equation}
Thus
\begin{equation}
p_{\mathrm{ER}}(a_2\mid s_1)
=
\frac{e}{
\left(
\frac{1+\sqrt{1+4e}}{2}
\right)^2}
\approx 0.55,
\end{equation}
whereas
\begin{equation}
p^\star(a_2\mid s_1)
=
\frac{e}{1+e}
\approx 0.73.
\end{equation}

This example isolates the relevant distinction.  The policy-posterior
construction places a posterior distribution over deterministic policies.  A
sampled policy that chooses $a_1$ at $s_1$ is therefore committed to choosing
$a_1$ whenever $s_1$ is revisited.  Entropy-regularized RL instead optimizes a
single stochastic policy.  If that policy chooses $a_1$ and the process returns
to $s_1$, the next action is resampled.  The difference is therefore one of
policy-level commitment versus action-level resampling.

\section{Inference}
\label{sec:inference}

We perform posterior inference over \textbf{deterministic
policies}. The formulation can also be applied to stochastic-policy
classes once a reference measure on the corresponding policy space is
specified, but we focus on deterministic policies because, for the
finite-horizon MDP objective considered here, there exists an optimal
deterministic policy. In our formulation, stochasticity in control is
therefore not introduced by treating the policy itself as intrinsically
stochastic; instead, it arises from posterior uncertainty over which
deterministic policy is optimal. This keeps uncertainty at the policy
level and avoids adding a separate layer of action-level randomness.
The stochastic control rule is the posterior-induced policy
$p^\star(a\mid s)$ defined in
Section~\ref{sec:prob-model-induced-policy}; in practice, we use the
variational approximation $q(a\mid s)$ and sample actions from
$q(\cdot\mid s)$ at execution time.

A natural baseline for posterior inference under the rollout estimator
in~\eqref{eqn:hat_logp_pi} is structured variational inference
\citep{HBW+13}, but single-trajectory objectives are often underdispersed
and prone to mode collapse~\citep{BKM17,YVS+18}. The sequential structure of returns makes
variational sequential Monte Carlo (VSMC) a natural alternative: its
multi-particle objective gives a tighter variational bound and a more
robust particle approximation. In our setting, the expected return
$J(\pi)$ is available only through the unbiased Monte Carlo
estimator~\eqref{eqn:hat_logp_pi}. This estimator can be treated as
exogenous estimator noise on the target density, analogous to
pseudo-marginal inference~\citep{AR09}, and incorporated directly into
VSMC optimization.

We assume a countable state space and a finite action space to enable
revisit bookkeeping and categorical action proposals; these assumptions
simplify the inference mechanics and do not affect the probabilistic
model. For deterministic policy inference, a policy assigns a single
action to each visited state, sampled on first visit:
\begin{equation}
\pi(s) \equiv a \mid s \sim \mathrm{Categorical}(\pmb{p}(s)).
\label{eqn:deterministic-policy}
\end{equation}
In our case studies, $\pmb{p}(s)$ is parameterized by a neural network
over a factored representation of $s$.

\paragraph{SMC sweep.}
Two adjustments to the vanilla SMC sweep are required:

\begin{enumerate}
\item \textbf{Deterministic policy consistency.} For each particle, the
action for a state is sampled from the proposal only on the first visit
to that state and is reused on all revisits. Equivalently, the particle
memoizes $\pi(s)$; on revisits, the proposal assigns probability $1$ to
the previously sampled action and $0$ to all others.

\item \textbf{Coupled transition randomness.} To ensure particle weights
reflect policy differences rather than independent realizations of
environment noise, transition randomness is shared across particles
within a sweep. Specifically, if two particles visit the same state $s$
and take the same action $a$ on the same visit count $k$, they are forced
to transition to the same successor state $s'$. This can be implemented
by lazily sampling and caching $\hat T^{k}_{s,a}\sim \mathcal{T}(\cdot\mid s,a)$
once per queried $(s,a,k)$ and reusing it for all particles in the sweep,
so that inference proceeds under a shared random realization $\hat T$ of
the dynamics.\footnote{This is related to common random numbers
\citep{MRF+20}, but is used here to preserve policy-level comparisons
across particles rather than only for variance reduction.}
\end{enumerate}

With these modifications, each sweep proceeds as in standard
SMC: particles advance under $\mathsc{Step}$, update weights,
and resample as needed.  Full pseudocode appears in the
appendix. Proposition~\ref{pro:policy-vsmc-target} specializes
Proposition~\ref{pro:vsmc-target-proposal} to policy inference.

\begin{pro}[Policy-VSMC target]
\label{pro:policy-vsmc-target}
Let $A_s=\pi(s)$ denote the latent action assignment of a deterministic
policy at state $s$. Under the posterior $p(\pi)$ induced by
Eq.~\eqref{eqn:logp_pi}, the ideal sequential proposal for a first visit
to state $s_t$ is
\begin{equation}
q^\star(a\mid s_t,\pi_{<t})
=
\Pr
\left[
A_{s_t}=a
\mid
\pi \text{ agrees with } \pi_{<t}
\right],
\label{eqn:policy-vsmc-conditional-target}
\end{equation}
where $\pi_{<t}$ denotes the action assignments fixed on previous first
visits. Marginalizing over the previous assignments gives the
posterior-induced stochastic policy
\begin{equation}
p^\star(a\mid s)
=
\Pr[\pi(s)=a].
\label{eqn:policy-vsmc-marginal-target}
\end{equation}
Thus the learned state-indexed proposal provides an amortized
(shared across SMC sweeps and state visits) approximation to the
conditional proposal during SMC and to the marginal
$p^\star(\cdot\mid s)$ at execution time.
\end{pro}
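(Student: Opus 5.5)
The plan is to reduce the statement to Proposition~\ref{pro:vsmc-target-proposal} by casting the first-visit action assignments as the latent sequence. Fix a shared realization $\hat T$ of the transition randomness, as produced by coupled transition randomness. Under $\hat T$, a deterministic policy $\pi$ generates a single trajectory. Along that trajectory, the first visits to distinct states occur in a well-defined order $\sigma_1,\sigma_2,\ldots$, and the only latent choices are the assignments $A_{\sigma_1},A_{\sigma_2},\ldots$; by the memoization rule, revisits add no new randomness.

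First I will write the target over these assignments. The unnormalized density $\tilde p(\pi)=\exp J(\pi)$ depends only on the assignments at states reachable under $\pi$. Unvisited coordinates can therefore be marginalized under counting measure, contributing only a constant factor per state. This gives a joint over the sequence $(A_{\sigma_1},\ldots)$ that is a valid sequential target for SMC.

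Next I apply the chain rule exactly as in the proof of Proposition~\ref{pro:vsmc-target-proposal}. The proposal family is autoregressive in the first-visit order, with $q(a\mid s_t,\pi_{<t})$ depending on the history only through the fixed assignments. The KL decomposition there shows that the optimal factor at step $t$ equals the posterior conditional of $A_{s_t}$ given $\pi_{<t}$. Since the event ``$\pi$ agrees with $\pi_{<t}$'' is exactly the conditioning history, this yields Eq.~\eqref{eqn:policy-vsmc-conditional-target}. The only point needing care is that the identity of the next first-visited state $s_t$ is itself a deterministic function of $\pi_{<t}$ and $\hat T$. Conditioning on it is therefore conditioning on the history, and the ordering is a stopping-time–like construction that keeps the factorization legitimate.

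For Eq.~\eqref{eqn:policy-vsmc-marginal-target}, I take expectation of the conditional over $\pi_{<t}$ drawn from the posterior. The tower property gives $\mathbb E[\Pr(A_s=a\mid\pi_{<t})]=\Pr[\pi(s)=a]=p^\star(a\mid s)$ by Eq.~\eqref{eqn:posterior-predictive-policy}. The amortization remark then follows because a single state-indexed network $\pmb p(s)$ is shared across sweeps. Dropping its dependence on $\pi_{<t}$ makes it fit the conditional on average over histories, that is, fit the marginal.

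I expect the main obstacle to be the dependence on the realization $\hat T$. Strictly, the conditional target is the posterior under $\exp$ of a single-rollout return rather than $\exp J(\pi)$. I will handle this with the pseudo-marginal argument already invoked in Section~\ref{sec:inference}: treat $\hat T$ as exogenous auxiliary randomness. The statement then holds for the extended target on $(\pi,\hat T)$, with the $\pi$-marginal taken as the intended posterior. I will note explicitly that this identification is exact under the paper's model only to the extent that the estimator noise is treated as in \citet{AR09}.
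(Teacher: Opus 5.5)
Your architecture matches the paper's proof: first-visit assignments as the sequential latent variables, the KL decomposition of Proposition~\ref{pro:vsmc-target-proposal}, and the tower property for the marginal. The step that fails is your handling of $\hat T$.

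\textbf{Why the pseudo-marginal repair fails.} Once you fix $\hat T$, your sequential target is built from $\exp G(\pi,\hat T)$, where $G$ is the single-rollout return of Eq.~\eqref{eqn:hat_logp_pi}. The pseudo-marginal repair does not bring you back to $p(\pi)\propto\exp J(\pi)$.
\begin{itemize}
\item The extended target $\propto \exp G(\pi,\hat T)\,P(d\hat T)$ has $\pi$-marginal proportional to $\mathbb{E}_{\hat T}[\exp G(\pi,\hat T)]$.
\item \citet{AR09} requires an unbiased estimator of $\tilde p(\pi)=\exp \mathbb{E}_{\hat T}[G(\pi,\hat T)]$ itself. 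Eq.~\eqref{eqn:hat_logp_pi} is unbiased only for $\log\tilde p$.
\item By Jensen the two differ by a $\pi$-dependent factor. The marginal you obtain is exactly the optimality-variable control-as-inference weight that the one-step bandit in Section~\ref{sec:worked-tabular-examples} separates from the policy posterior ($r_i e+1-r_i$ versus $e^{r_i}$).
\item The conditionals of that extended target given a fixed $\hat T$ are also clairvoyant: they depend on transition draws not yet realized at step $t$. So they are not the $\hat T$-free conditional in Eq.~\eqref{eqn:policy-vsmc-conditional-target}.
\end{itemize}
Your closing hedge is therefore not a caveat. It is the point at which the argument proves the proposition for a different posterior.

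The same conflation appears earlier, where you say that coordinates outside the first-visit sequence contribute only constant factors. That holds for $\exp G(\pi,\hat T)$. But $J(\pi)$ also depends on assignments at states that are reachable under $\pi$ yet absent from the realized $\hat T$-trajectory. Marginalizing those is a genuine marginalization under $p$, not a constant.

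\textbf{The fix.} Keep $\hat T$ out of the target. Use the joint law $p(\pi)\,P(d\hat T)$ with $p(\pi)\propto\exp J(\pi)$ and $\pi\perp\hat T$, so that $\hat T$ serves only to generate the first-visit order.
\begin{itemize}
\item The event $\{\sigma_t=s\}$ is determined by $(A_{\sigma_{<t}},\hat T)$.
\item Independence then gives $\Pr[A_{\sigma_t}=a\mid A_{\sigma_{<t}},\sigma_t=s,\hat T]=p(A_s=a\mid A_{\sigma_{<t}})$.
\item Hence the chain rule holds along this adapted, stopping-time order, with off-trajectory coordinates marginalized under $p$.
\item The KL decomposition of Proposition~\ref{pro:vsmc-target-proposal}, applied conditionally on $\hat T$, yields Eq.~\eqref{eqn:policy-vsmc-conditional-target} directly.
\end{itemize}
This is what the paper's shorter proof does implicitly. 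It applies Proposition~\ref{pro:vsmc-target-proposal} to the exact posterior over first-visit assignments and uses the Markov property only to justify indexing the proposal by $s_t$ and $\pi_{<t}$. It never passes through single-rollout weights.

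If you want to connect the result to the single-rollout estimator, a different mechanism works. In the single-sample bound of Proposition~\ref{pro:vsmc-target-proposal}, the expectation over $\hat T$ sits outside the logarithm. So $\mathbb{E}_{\hat T}\mathbb{E}_q[G(\pi,\hat T)]=\mathbb{E}_q[J(\pi)]$ for any proposal over policies that does not depend on $\hat T$. That is the unbiased-log-density setting of \citet{HBW+13}, not pseudo-marginal inference.
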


\begin{proof}
A deterministic policy is equivalently the collection of latent action
assignments $\{A_s:s\in\mathcal S\}$. By the first-visit construction
above, Policy VSMC samples such an assignment only on the first visit to
$s$ and reuses it thereafter. Hence the sequential latent variables are
the first-visit policy assignments. Applying
Proposition~\ref{pro:vsmc-target-proposal} to this sequential
representation gives the posterior conditional in
Eq.~\eqref{eqn:policy-vsmc-conditional-target}. The Markov property of
the MDP ensures that, once the current state and the previously fixed
policy assignments are given, future rewards depend on the past trajectory
only through these quantities. Thus the next proposal variable is the
policy assignment $A_{s_t}$, not an action indexed by the full history.

The posterior-induced stochastic policy is the marginal probability that a
posterior-sampled deterministic policy assigns action $a$ to state $s$.
Marginalizing the conditional proposal over previous first-visit
assignments therefore gives
\begin{equation}
\Pr[A_s=a]
=
\Pr[\pi(s)=a]
=
p^\star(a\mid s),
\end{equation}
which proves the claim.
\end{proof}

At execution time, we use the learned proposal as an amortized approximation
to the posterior-induced stochastic policy. If the VSMC approximation were
exact, sampling $\pi$ and executing $\pi(s)$ would sample from
$p^\star(\cdot\mid s)$. The learned proposal $q_\theta(a\mid s)$ is trained
to approximate the corresponding posterior action conditionals, so we execute
by sampling from $q_\theta(\cdot\mid s)$ as an approximation to
$p^\star(\cdot\mid s)$.

\paragraph{Optimization objective} The original VSMC formulation assumes reparameterizable proposals and
typically omits score-function terms associated with the non-differentiable
\emph{resampling} operation due to their high variance. In finite-action
MDPs, however, the categorical proposal over actions is not
reparameterizable. Consequently, while the resampling-induced score terms
may still be dropped, the score-function contribution from
\emph{sampling actions from the proposal} must be retained to obtain
meaningful gradients. Using a temporally stratified, variance-reduced
learning signal~\citep{SHW+15}, we optimize
\begin{equation}
\mathcal{L} = \log \hat Z + \sum_{t=1}^H \left(\overline{\log \hat
Z_t} \cdot \sum_{i=1}^N \log q_\theta(a_{t,i} \mid s_{t, i})\right),
\label{eqn:det-surrogate-objective-stratified}
\end{equation}
where $\log \hat Z_t$ denotes the contribution of steps $t,\ldots,H$ to
$\log \hat Z$, and the overline denotes a stop-gradient operation.

Proposition~\ref{pro:unbiased-gradient} shows that optimizing the
surrogate objective corresponds to stochastic gradient ascent on
a well-defined scalar objective, rather than a heuristic update
rule.

\begin{pro}[Score-function gradient with stopped resampling]
\label{pro:unbiased-gradient}
Let $\hat Z(\hat T, \mathbf a, \mathbf r)$ denote the SMC normalizing
constant estimator produced by one sweep of the procedure above, where
$\hat T$ is the shared random realization of the transition dynamics
induced by the coupled-transition rule,
$\mathbf a=\{a_{t,i}\}_{t=1,i=1}^{H,N}$ are the actions sampled from the
proposal $q_\theta$, and $\mathbf r$ denotes the resampling randomness.
Define
\begin{equation}
\mathcal{J}(\theta)
=
\mathbb{E}_{\hat T,\;\mathbf a \sim q_\theta,\;\mathbf r}
\left[
    \log \hat Z(\hat T, \mathbf a, \mathbf r)
\right].
\end{equation}
Then, treating resampling decisions as stopped-gradient randomness, the
gradient of the surrogate objective in
Eq.~\eqref{eqn:det-surrogate-objective-stratified} is an unbiased estimator
of the gradient of this stopped-resampling objective with respect to the
proposal parameters $\theta$.
\end{pro}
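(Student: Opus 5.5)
The argument is the score-function (REINFORCE) identity applied to the joint law of one sweep, with resampling held fixed. First I will make precise what ``stopped resampling'' means. Fix the shared transition realization $\hat T$ and the resampling randomness $\mathbf r$, for instance as uniforms that pick ancestor indices by inverse-CDF on the normalized weights. The ancestor choices depend on $\theta$ through the weights, but under the stopped convention this dependence is not differentiated. Conditional on $(\hat T,\mathbf r)$ and the frozen ancestry, the actions then have a product law
\begin{equation}
q_\theta(\mathbf a\mid \hat T,\mathbf r)=\prod_{t=1}^H\prod_{i=1}^N q_\theta(a_{t,i}\mid s_{t,i}).
\end{equation}
Here $s_{t,i}$ is a deterministic function of earlier actions, $\hat T$ and the ancestry. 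By the consistency rule, revisit factors equal $1$ and contribute zero score. The stopped-resampling objective $\mathcal J(\theta)$ is therefore the expectation of $\log\hat Z$ under $p(\hat T)\,p(\mathbf r)\,q_\theta(\mathbf a\mid\hat T,\mathbf r)$, where only the last factor depends on $\theta$.

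Next I will differentiate under the integral. This is legitimate because actions and states are finite along a horizon-$H$ sweep and $\theta\mapsto q_\theta$ is smooth, so $\log\hat Z$ is bounded on the support. Differentiating gives
\begin{equation}
\nabla_\theta\mathcal J=\mathbb E\Big[\nabla_\theta\log\hat Z+\log\hat Z\sum_{t,i}\nabla_\theta\log q_\theta(a_{t,i}\mid s_{t,i})\Big].
\end{equation}
The first term is the pathwise part: $\log\hat Z$ depends on $\theta$ directly through the weights $w_t^{(i)}$, which contain the proposal density. Differentiating $\log\hat Z$ in the surrogate Eq.~\eqref{eqn:det-surrogate-objective-stratified} produces exactly this term, and stop-gradients leave the resampling uncorrected.

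The third step, which I expect to be the main obstacle, is to show that replacing $\log\hat Z$ by the stratified $\overline{\log\hat Z_t}$ in front of the step-$t$ scores leaves the expectation unchanged. I will write $\log\hat Z=\sum_{u}\ell_u$ with $\ell_u=\log\frac1N\sum_i w_u^{(i)}$, so that $\log\hat Z_t=\sum_{u\ge t}\ell_u$. It then suffices to show that for $u<t$,
\begin{equation}
\mathbb E\big[\ell_u\,\nabla_\theta\log q_\theta(a_{t,i}\mid s_{t,i})\big]=0.
\end{equation}
To prove this I will condition on the filtration $\mathcal F_{t}$ generated by $\hat T$, all actions before step $t$, and the resampling up to the point where $a_{t,\cdot}$ is drawn. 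Given $\mathcal F_t$, $\ell_u$ is measurable for $u<t$, and $\mathbb E[\nabla_\theta\log q_\theta(a_{t,i}\mid s_{t,i})\mid\mathcal F_t]=\sum_a\nabla_\theta q_\theta(a\mid s_{t,i})=0$.

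The delicate points are these. The resampling at step $t$ must use only the realized $\mathbf r$ and weights up to $t$, so it does not depend on $a_{t,\cdot}$. The laziness of $\hat T$ requires that $\hat T$ be treated as a full pre-drawn table indexed by $(s,a,k)$, independent of $\theta$, and I will state this explicitly. Revisit factors have degenerate proposals with zero score.

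Combining the pieces, the gradient of the surrogate equals the pathwise term plus the stratified score term. Its expectation equals $\nabla_\theta\mathcal J$, which proves unbiasedness.
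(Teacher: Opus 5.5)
Your proof is correct and follows the paper's route: you apply the score-function identity to the proposal-sampled actions conditional on $(\hat T,\mathbf r)$, without differentiating the resampling, and then show that pairing each step's scores only with $\log\hat Z_t$ leaves the expectation unchanged. Your version is more complete than the paper's in three places: you keep the pathwise term $\mathbb{E}[\nabla_\theta\log\hat Z]$ coming from $\log q_\theta$ inside the weights (the paper's displayed identity omits it, although the un-stopped $\log\hat Z$ in the surrogate supplies it), you prove the stratification step by the tower property on $\mathcal F_t$ where the paper only asserts it, and you justify treating the lazily sampled $\hat T$ as a $\theta$-independent pre-drawn table.
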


\begin{proof}
The SMC sweep contains three sources of randomness: the shared transition
realization $\hat T$, the proposal-sampled actions $\mathbf a$, and the
resampling decisions $\mathbf r$. As in standard VSMC, we do not
differentiate through the discrete resampling operation and omit the
corresponding resampling score-function terms. The claim is therefore about
the objective obtained after this standard VSMC approximation, with
gradients taken through the action proposal.

Conditioned on $\hat T$ and $\mathbf r$, the remaining $\theta$-dependent
randomness in $\log \hat Z(\hat T,\mathbf a,\mathbf r)$ comes from sampling
actions from $q_\theta$. The realized value of $\log \hat Z$ is then a
deterministic function of the sampled actions, the shared transition
realization, and the resampling decisions. Applying the score-function identity
to the proposal-sampled actions gives
\begin{equation}
\nabla_\theta
\mathbb{E}_{\mathbf a}
\left[
    \log \hat Z(\hat T,\mathbf a,\mathbf r)
\right]
=
\mathbb{E}_{\mathbf a}
\left[
    \log \hat Z(\hat T,\mathbf a,\mathbf r)
    \sum_{t,i}
    \nabla_\theta \log q_\theta(a_{t,i}\mid s_{t,i})
\right],
\end{equation}
where the expectation is conditional on $\hat T$ and $\mathbf r$.
Introducing a stop-gradient baseline preserves unbiasedness of the
score-function estimator. Taking expectation over $\hat T$ and $\mathbf r$
completes the result.

The temporally stratified signal $\log \hat Z_t$ in
Eq.~\eqref{eqn:det-surrogate-objective-stratified} is a variance-reduction,
or Rao--Blackwellization, of the same score-function estimator: each sampled
action is paired only with the future contribution to $\log \hat Z$ that can
depend on it. This leaves the expectation unchanged.
\end{proof}

\section{Related Work}

This work relates to (i) probabilistic formulations of planning and
control, and (ii) entropy-regularized reinforcement learning. Our key
distinction is that inference is carried out over \emph{policies}
themselves, rather than over trajectory-level auxiliary variables or
within a directly optimized parametric stochastic policy.

\subsection{Control and Planning as Inference}

Casting control and planning as probabilistic inference has a long
history, including planning-as-inference in graphical models~\citep{A03}
and subsequent formulations that encode optimality via auxiliary
variables or likelihood terms that bias trajectories toward high return
(e.g.,~\citet{BT12,TS06}).  Active inference likewise casts action
selection as approximate Bayesian inference under a generative model
with preferences over outcomes.  More recently, control-as-inference
derivations have shown that entropy-regularized RL objectives arise from
variational inference constructions (e.g.,~\citet{L18}), typically by
introducing fictitious observations or optimality variables.

We adopt this perspective but make a different modeling choice: the
\emph{policy itself} is the latent random variable, and its expected
return defines an unnormalized log density. This yields a posterior over
policies directly, without introducing additional observation channels
or trajectory-level optimality variables.

\subsection{Entropy-Regularized Reinforcement Learning}

Entropy-regularized RL and stochastic policy optimization methods,
including policy gradients~\citep{SMS+99}, soft Q-learning~\citep{HTA+17},
and Soft Actor-Critic (SAC)~\citep{HZA+18,C19}, optimize objectives of the form
$\mathbb{E}[R(\pi)] + \alpha \mathcal{H}(\pi)$.  Connections and
equivalences among these approaches have been studied
extensively (e.g., \citet{XCA17}), and from a
control-as-inference viewpoint entropy can be interpreted as
arising from a variational bound.

While related, our approach differs in two respects. First,
stochasticity reflects posterior uncertainty over deterministic
policies, rather than entropy within a single learned stochastic
policy. Second, this inference formulation makes
\textbf{variational sequential Monte Carlo (VSMC)}~\citep{NLR+18} a
natural approximation method, in contrast to the single-trajectory
variational objectives that underlie many entropy-regularized RL
algorithms.

\subsection{Risk-sensitive and soft-robust MDPs}

Risk-sensitive and soft-robust MDPs provide a related but distinct comparison
point. A common risk-sensitive objective is the entropic risk measure of
cumulative cost,
\begin{equation}
    \log
    \mathbb{E}_{T,\pi}
    \left[
        \exp\left(
            \alpha\sum_{t=1}^H c_t
        \right)
    \right],
\end{equation}
which has recently been connected to soft-robust MDPs and studied
algorithmically~\citep{ZHL24,MBG+25}. With rewards rather than costs, the
analogous exponentiated-return objective coincides with the optimality-variable
control-as-inference objective~\citep{L18}:
\begin{equation}
    \log
    \mathbb{E}_{T,\pi}
    \left[
        \exp\left(
            \sum_{t=1}^H R_t
        \right)
    \right].
\end{equation}

The distinction from our policy-posterior construction is the order in which
environment stochasticity and policy uncertainty are handled. Entropic-risk and
optimality-variable control-as-inference objectives exponentiate realized
trajectory returns and then marginalize over the stochastic quantities,
\textit{effectively treating stochastic transitions as latent model variables}.
In contrast, our construction first averages environment stochasticity when
evaluating a fixed deterministic policy,
\begin{equation}
    J(\pi)
    =
    \mathbb{E}_{T}
    \left[
        \sum_{t=1}^H R_t
    \right],
\end{equation}
and then assigns policy density $\tilde p(\pi)=\exp\{J(\pi)\}$. The posterior
stochastic policy is obtained by marginalizing over deterministic policies.  As
the worked tabular example shows, these constructions can coincide in simple
deterministic settings but generally differ in stochastic MDPs.  Thus,
entropic-risk methods intentionally modify the control objective to control
sensitivity to high-cost or high-reward outcomes. Our objective is different:
we retain expected-return evaluation for each deterministic policy and use
inference to represent uncertainty over which deterministic policy is selected.

Risk-sensitive RL is therefore not a direct algorithmic baseline for the
experiments in this paper. Its purpose is to change the criterion used to
evaluate a policy under environment uncertainty, for example by making the
agent optimistic or pessimistic with respect to stochastic transitions and
rewards. This work instead assumes that the environment model is fixed and uses
a posterior distribution over policies to represent uncertainty over agent
preferences.

\section{Experiments}

We evaluate the proposed policy inference framework across a range of
domains designed to expose different structural aspects of decision
making under uncertainty, and compare it to entropy-regularized policy
optimization. Throughout the experiments we contrast inference over
distributions of deterministic policies (VSMC) with direct optimization
of entropy-regularized stochastic policies (SAC). The experiments are
designed to examine (i) qualitative structure of induced behavior in a
diagnostic domain, and (ii) differences between deterministic-policy
inference and entropy-regularized optimization across increasingly
stochastic and complex planning problems. The purpose of this
comparison is not to establish uniform superiority in expected reward,
since the two methods optimize different objectives, but to examine how
these objectives induce different solution structure, uncertainty
representations, and action-selection behavior across domains.
The experiments are intended to probe the consequences of this modeling
choice---inference over policies---rather than to position VSMC as a new
state-of-the-art optimizer for entropy-regularized control benchmarks.

We begin by exploring the proposed policy inference framework on a grid
world domain. The ease of static visualization and apparent simplicity
of grid worlds facilitates qualitative inspection of the induced
stochastic policy. We then use the grid worlds and three standard
discrete benchmarks from the literature---Blackjack, Triangle Tireworld,
and Academic Advising---to compare policy VSMC to discrete Soft
Actor-Critic (SAC)\footnote{SAC optimizes an
entropy-regularized expected-return objective, whereas policy
VSMC maximizes an SMC log-evidence bound $\mathbb{E}[\log \hat
Z]$, where $\log \hat Z$ aggregates particle weights via a
log-mean-exp across the sweep. The two objectives are not the
same, so differences in performance are informative primarily as
differences in induced behavior rather than as a pure leaderboard
comparison.}~\citep{HZA+18,C19}, highlighting differences in the
resulting policies and their suitability to particular MDP types.

Throughout the experiments, we run VSMC with 10 particles for $50{,}000$
iterations (SMC sweeps), adjusting the initial learning rate between
$10^{-5}$ and $3 \cdot 10^{-4}$ for each domain, with cosine decay to
$0.1$ of the initial rate. These settings are reported to support
reproducibility rather than to suggest an ``optimal'' tuning: in our
experiments VSMC is stable across a relatively broad range of learning
rates and schedules, the iteration budget is chosen to be large enough
to ensure convergence across all domains, and we intentionally keep the
number of particles small. This is consistent both with the original
VSMC results of \citet{NLR+18}, which show strong performance with
small particle counts, and with the broader particle-variational-
inference analysis of \citet{RKL+18}, which shows that increasing the
number of particles can tighten the bound while degrading the
signal-to-noise ratio of inference-network gradients. We adapt SAC from CleanRL~\citep{HDY+22} for
discrete actions by using MLP critics with two hidden layers, and train
for $1{,}000{,}000$ time steps. In both VSMC and SAC, all networks use
two hidden layers of width 64. Each algorithm--domain pair is evaluated
over 25 independent training runs, and policies are evaluated using
$10{,}000$ trajectories per run.  The entropy weight for SAC is
kept at $\alpha=1$ except where varied explicitly. We fix
$\alpha$ rather than tuning a domain-specific entropy-weight
schedule because the experiments are intended to compare
modeling choices---posterior inference over a distribution of
deterministic policies versus direct optimization of a
stochastic policy---rather than optimize SAC hyperparameters for
each benchmark.

\subsection{Grid Worlds}
 
Grid worlds provide a controlled setting in which policy
distributions can be visualized directly, allowing qualitative
inspection of multimodality, uncertainty, and variability across
runs. They therefore serve as a diagnostic domain for understanding
the behavior of the inference procedure itself.
 
In the grid world domain we use in this study, the environment
is a rectangular grid. Four actions---Right, Up, Down, and
Left---advance the agent to the corresponding adjacent cell.  An
action that would take the agent out of the grid has no effect.
The environment is slippery: upon any action the agent moves,
with probability $p_\mathit{succ}=0.8$, in the direction of the
action, and otherwise in an adjacent direction. The reward
collected by the agent upon each action is determined by the
color of the cell to which the agent moves: grey (pavement) ---
0, red (gravel) --- -1, yellow (goal) --- +5, green (swamp) ---
-5. Each step incurs a cost of -0.1. Yellow and green cells are
absorbing: once the agent reaches a yellow or a green cell, no
further reward is collected and no action moves the agent out of
the cell.

Inferred policies are represented by \textit{policy and
occupancy maps} in the figures below
(Figure~\ref{fig:gw}--\ref{fig:gw-vsmc-sac}).  A black border denotes
the initial cell.  The darkness of the middle part of a cell
represents the occupancy: the darker a cell, the more
trajectories passed through the cell. The white cross with
unevenly sized beams in the center of each cell represents the
policy distribution in that cell, with the length of each beam
denoting the probability of the corresponding direction. The
maps are averaged over 25 runs: blurrier crosses mean more
variation in the inferred policies across runs.

\begin{figure}
\centering
\begin{subfigure}[t]{0.32\textwidth}
  \centering
  \includegraphics[width=0.8\linewidth]{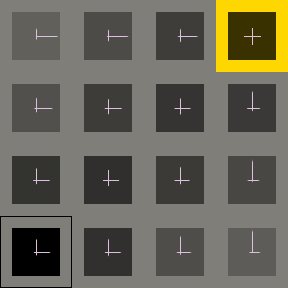} 
  \caption{flat world}
  \label{fig:gw-2}
\end{subfigure}
\begin{subfigure}[t]{0.32\textwidth}
  \centering
  \includegraphics[width=0.8\linewidth]{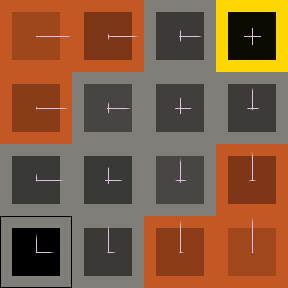}
  \caption{unimodal world}
  \label{fig:gw-4}
\end{subfigure}
\begin{subfigure}[t]{0.32\textwidth}
  \centering
  \includegraphics[width=0.8\linewidth]{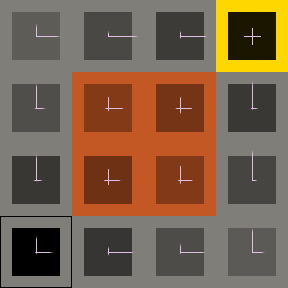} 
  \caption{multimodal world}
  \label{fig:gw-3}
\end{subfigure}
\caption{Grid worlds: policies and occupancies}
\label{fig:gw}
\end{figure}
We begin by applying policy VSMC to three $4\times 4$ grid
worlds (Figure~\ref{fig:gw}), using a rollout horizon of 20 steps.
In the flat world
(Figure~\ref{fig:gw-2}), trajectories
cover the grid evenly. In the ``unimodal'' world
(Figure~\ref{fig:gw-4}) the expensive to travel red regions 
in the complementary corners of the grid push trajectories 
closer to the diagonal connecting the starting and the goal
cells. In the ``multimodal'' world (Figure~\ref{fig:gw-3}) the red
region is in the center of the grid, pushing the trajectories to
pass along the edges of the grid. Because a policy
\textit{distribution} is inferred (rather than just a single
policy with the highest expected return), multiple actions in
each cell have non-zero probabilities.

\begin{figure}
\centering
\begin{subfigure}[c]{0.26\textwidth}
\centering
 \includegraphics[width=\linewidth]{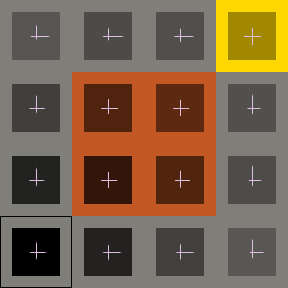} 
 \caption{deterministic policies\\ not enforced}
 \label{fig:gw-3-ndp}
\end{subfigure}
\hspace{6pt}
\begin{subfigure}[c]{0.16\textwidth}
\centering
 \includegraphics[width=\linewidth]{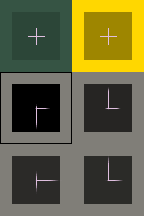} 
 \caption{shared\\ dynamics}
 \label{fig:gw-5-visits}
\end{subfigure}
\hspace{2pt}
\begin{subfigure}[c]{0.16\textwidth}
\centering
 \includegraphics[width=\linewidth]{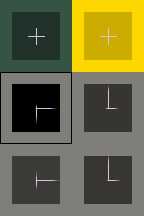}
 \caption{independent\\ dynamics}
 \label{fig:gw-5-nsd-visits}
\end{subfigure}
\hspace{6pt}
\begin{subfigure}[c]{0.24\textwidth}
\centering
\includegraphics[width=\textwidth]{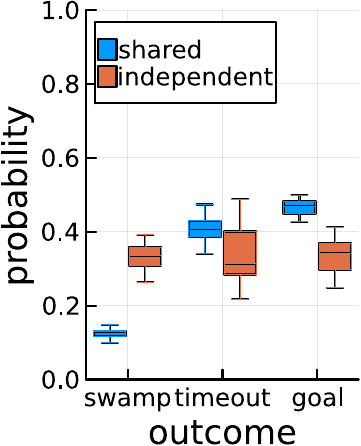}
\caption{shared vs. independent}
\label{fig:gw-5-nsd-outcomes}
\end{subfigure}
\caption{Grid worlds: ablation studies}
\label{fig:ablation}
\end{figure}
For policy inference, the SMC sweep of VSMC was modified to a)
enforce deterministic policies b) share the same environment
dynamics among all particles. Figure~\ref{fig:gw-3-ndp} 
shows the effect of dropping the enforcement of deterministic
policies: a higher-entropy policy distribution.
Figures~\ref{fig:gw-5-visits}--\ref{fig:gw-5-nsd-outcomes} explore the
effect of sharing environment dynamics among all particles on
the inferred policies. A small slippery instance with two
absorbing cells, a swamp with reward of -5 at (1, 3) and a goal
with reward of 5 at (2, 3), and $p_\mathit{succ}=0.5$ is used
for the comparison.  The starting position is at (1, 2) and the
rollout horizon is 10 steps. To avoid slipping into the
swamp, the agent should move \textit{down}, to (1, 1), and
this is what the policy with shared dynamics mostly suggests.
With independent dynamics the agent frequently moves
\textit{right}, along the shortest path to the goal.

Finally, we compare VSMC to SAC. Figure~\ref{fig:gw-vsmc-sac}
shows the policies inferred by VSMC (Figure~\ref{fig:gw-3-vsmc})
and SAC (Figure~\ref{fig:gw-3-sac}) and compares their
trajectory return distributions (Figure~\ref{fig:gw-3-vsmc-sac}). VSMC and
SAC trajectory return distributions are close but different, and the
policies differ in particular along the grid edges --- SAC,
optimizing an entropy-regularized stochastic policy, uses
actions directed toward the grid boundaries to increase the
entropy. VSMC penalizes such actions strongly because a
deterministic policy directing the agent into a grid boundary
can escape the current cell only due to environment
stochasticity.
\begin{figure}
\centering
\begin{subfigure}[t]{0.32\textwidth}
  \centering
  \includegraphics[width=0.8\linewidth]{figures/gw-3-visits.png}
  \caption{VSMC}
  \label{fig:gw-3-vsmc}
\end{subfigure}
\begin{subfigure}[t]{0.32\textwidth}
  \centering
  \includegraphics[width=0.8\linewidth]{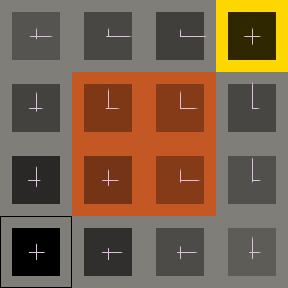}
  \caption{SAC}
  \label{fig:gw-3-sac}
\end{subfigure}
\begin{subfigure}[t]{0.32\textwidth}
\centering
\includegraphics[width=0.8\textwidth]{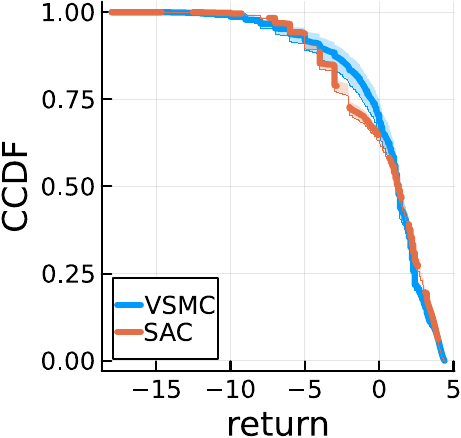}
\caption{trajectory return distributions}
\label{fig:gw-3-vsmc-sac}
\end{subfigure}
\caption{Grid worlds: VSMC vs. SAC}
\label{fig:gw-vsmc-sac}
\end{figure}

\subsection{Blackjack}

Blackjack is a card game where the goal is to beat
the dealer by obtaining cards that sum to closer to 21 (without going over 21)
than the dealer's cards.  Blackjack provides a stochastic control
problem with a compact state space and a known optimal policy,
making it possible to compare inferred policies against a
ground-truth solution while examining the effect of entropy
regularization in a domain where randomness arises primarily
from the environment rather than exploration. The game starts
with the dealer having one face up and one face down card, while
the player has two face up cards. All cards are drawn from an
infinite deck (i.e. with replacement).  The player has the sum
of cards held. The player can request additional cards (hit)
until they decide to stop (stick) or exceed 21 (bust). After the
player sticks, the dealer reveals their facedown card, and draws
cards until their sum is 17 or greater.  If the dealer goes
bust, the player wins.  If neither the player nor the dealer
busts, the outcome (win, lose, draw) is decided by whose sum is
closer to 21. 

\begin{table}
\centering
\caption{Blackjack: policy statistics}
\label{tab:bj}
\begin{tabular}{l | c c c c }
\textbf{policy} & \textbf{expected return} &   \multicolumn{3}{c}{\textbf{probability}} \\
                &                          & \textbf{loss} & \textbf{draw} & \textbf{win} \\ \hline
   VSMC & $-0.19 \pm 0.04$ & $0.57 \pm 0.02$ & $0.05 \pm 0.01$ & $0.38 \pm 0.02$ \\
      VSMC, $10 \cdot r$ & $-0.14 \pm 0.03$ & $0.54 \pm 0.01$ & $0.05 \pm 0.01$ & $0.41 \pm 0.01$ \\
     VSMC, $100 \cdot r$ & $-0.12 \pm 0.04$ & $0.53 \pm 0.02$ & $0.06 \pm 0.01$ & $0.41 \pm 0.02$ \\
             SAC & $-0.32 \pm 0.01$ & $0.63 \pm 0.01$ & $0.06 \pm 0.01$ & $0.31 \pm 0.01$ \\
      SAC, $\alpha=0.1$ & $-0.15 \pm 0.01$ & $0.54 \pm 0.02$ & $0.07 \pm 0.01$ & $0.39 \pm 0.02$ \\
     SAC, $\alpha=0.01$ & $-0.08 \pm 0.03$ & $0.50 \pm 0.05$ & $0.08 \pm 0.02$ & $0.42 \pm 0.03$ \\
         optimal & $-0.06$ & $0.49$ & $0.09$ & $0.42$
\end{tabular}
\end{table}

\begin{figure}
  \centering
  \includegraphics[width=0.8\linewidth]{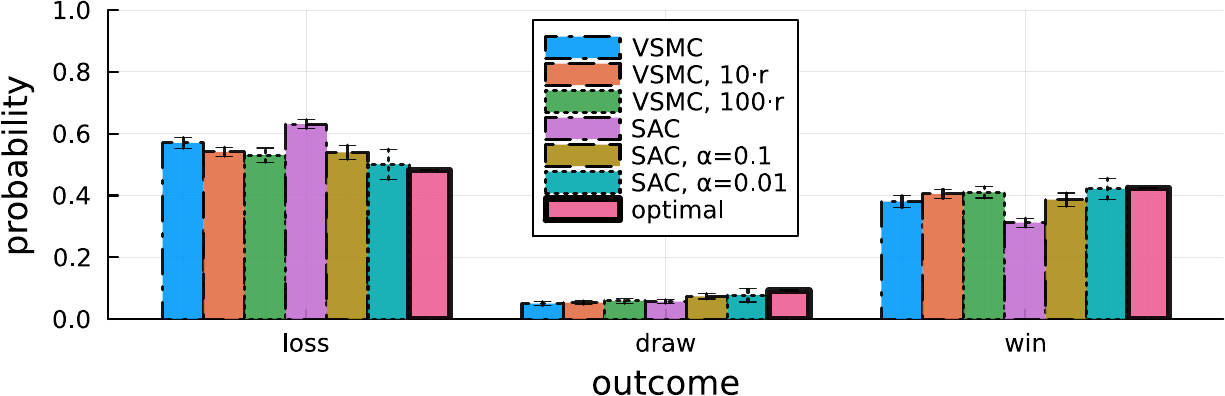}
  \caption{Blackjack: outcome probabilities}
  \label{fig:bj}
\end{figure}

The variant described in ~\citet[Example 5.1]{SB98}, as implemented
in Gymnasium~\citep{TKT+24}, is used in this study.  VSMC and SAC are compared to each
other and to the optimal policy, as a baseline, in Table~\ref{tab:bj} and Figure~\ref{fig:bj}.
The optimal (return-maximizing) player's policy was
computed by value iteration and the policy's statistics were estimated by
Monte-Carlo evaluation. Neither VSMC nor SAC with the default
temperature/entropy weight matches the optimal policy, which is expected
because both methods are deliberately regularized. The main observation
is not that one method dominates the other in mean return, but that the
two objectives induce different operating points: VSMC produces a lower
draw probability than SAC at $\alpha=1$, while matching this behavior in
SAC requires substantially weaker entropy regularization
($\alpha=0.1$ or $\alpha=0.01$). At $\times 10$ and $\times 100$ higher
reward magnitudes, VSMC approaches the optimal win probability, while 
still having a lower draw probability and a higher loss probability
than SAC with corresponding entropy regularization.

These results highlight that, under comparable regularization
strength, policy inference and entropy-regularized optimization
induce different trade-offs between exploration and outcome
variance, even in a domain with a known optimal policy.

\subsection{Triangle Tireworld}

In Triangle Tireworld~\citep{LT07}, the agent travels through a
triangular structure of locations connected by directed roads.
The agent must arrive from the initial location to the goal.
With a fixed probability, the agent gets a flat. Some locations
carry a spare tire.  If the agent loaded a spare tire prior to
getting a flat, it changes the tire and continues the travel.
Otherwise, the agent is stuck. The agent gets the reward of 10
for reaching the goal, -10 for getting stuck, and -0.1 for every
step. Triangle Tireworld introduces irreversible stochastic events and
explicit risk–reward trade-offs. Successful policies must plan for
low-probability failures whose consequences cannot be undone.
The domain and the instances are based on the version from the
2014 International planning competition~\cite{VCM+15}.

\begin{table}
\centering
\caption{Triangle Tireworld: policy statistics}
\label{tab:tw}
\begin{tabular}{r l | c c c c }
\textbf{\#} & \textbf{policy} & \textbf{expected return} & \textbf{success probability} \\ \hline
     & VSMC               & $-0.80 \pm 0.52$  & $0.47 \pm 0.03$ \\
1    & VSMC ($0.2\cdot r$)&  $\phantom{-}1.07 \pm 0.32$  & $0.57 \pm 0.02$ \\
     & SAC                &  $\phantom{-}1.27 \pm 0.11$  & $0.58 \pm 0.01$ \\
     & SAC ($\alpha=0.2$) & $-0.26 \pm 0.62$  & $0.50 \pm 0.03$ \\
\hline
     & VSMC               & $-4.51 \pm 0.32$  & $0.29 \pm 0.02$ \\
4    & VSMC ($0.2\cdot r$)& $-1.64 \pm 0.10$  & $0.44 \pm 0.01$ \\
     & SAC                & $-1.90 \pm 0.17$  & $0.43 \pm 0.01$ \\
     & SAC ($\alpha=0.2$) & $-7.25 \pm 1.08$  & $0.15 \pm 0.06$ \\
\hline
     & VSMC               & $-2.91 \pm 0.26$  & $0.39 \pm 0.02$ \\
7    & VSMC ($0.2\cdot r$)& $-1.04 \pm 0.20$  & $0.49 \pm 0.01$ \\
     & SAC                & $-1.35 \pm 0.09$  & $0.48 \pm 0.01$ \\
     & SAC ($\alpha=0.2$) & $-7.49 \pm 0.38$  & $0.28 \pm 0.03$ \\
\hline
     & VSMC               & $-6.14 \pm 0.19$  & $0.22 \pm 0.02$ \\
10   & VSMC ($0.2\cdot r$)& $-5.02 \pm 0.06$  & $0.30 \pm 0.01$ \\
     & SAC                & $-5.27 \pm 0.04$  & $0.28 \pm 0.01$ \\
     & SAC ($\alpha=0.2$) & $-8.83 \pm 0.29$  & $0.14 \pm 0.03$
\end{tabular}
\end{table}
\begin{figure}
\centering
\begin{subfigure}[t]{0.24\textwidth}
  \centering
  \includegraphics[width=0.8\linewidth]{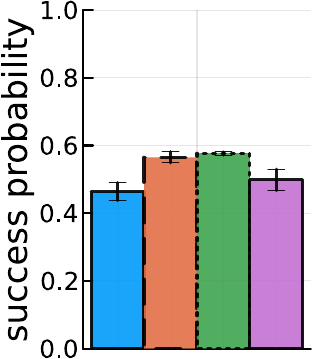}
  \caption{instance 1}
  \label{fig:tw-1}
\end{subfigure}
\begin{subfigure}[t]{0.24\textwidth}
  \centering
  \includegraphics[width=0.8\linewidth]{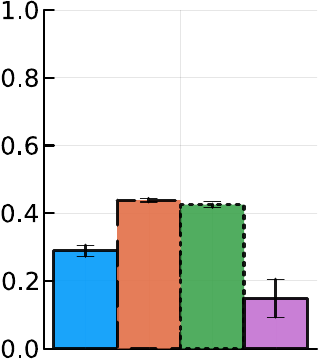}
  \caption{instance 4}
  \label{fig:tw-4}
\end{subfigure}
\begin{subfigure}[t]{0.24\textwidth}
  \centering
  \includegraphics[width=0.8\linewidth]{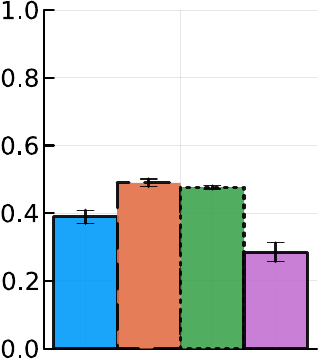}
  \caption{instance 7}
  \label{fig:tw-7}
\end{subfigure}
\begin{subfigure}[t]{0.24\textwidth}
  \centering
  \includegraphics[width=0.8\linewidth]{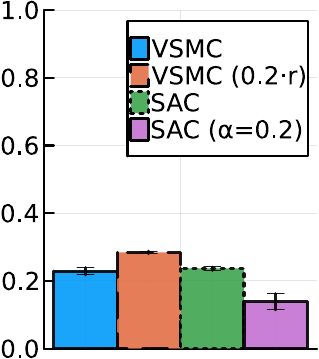}
  \caption{instance 10}
  \label{fig:tw-10}
\end{subfigure}
\caption{Triangle Tireworld: success probabilities}
\label{fig:tw}
\end{figure}
With the original rewards, Triangle Tireworld induces a large return gap
between ``fast but risky'' and ``safe but slow'' behaviors. Under our Bayesian
formulation this makes the posterior highly peaked, yielding low mean and high
variance of expected returns over training runs. Scaling rewards down by a
factor of five reduces this separation, producing a less concentrated
posterior; under this setting VSMC exhibits performance comparable to SAC. Note
that SAC is also sensitive to regularization strength: when SAC's entropy
regularization is equivalently downweighted by setting $\alpha$ to $0.2$, SAC
struggles to find a high-return policy, and the obtained stochastic policy
exhibits similarly low return and high variance.  However, the difference is
semantic rather than just technical: in SAC, the regularization weight is an
algorithm parameter which can be tuned and scheduled to optimize the expected
return on a particular domain; in planning as policy inference, the reward
scale is a part of the domain specification.  Table~\ref{tab:tw} and
Figure~\ref{fig:tw} report results for instances 1, 4, 7, and 10, which are
representative instances spanning the range of difficulty in this domain; the
omitted instances follow the same qualitative trend.

Triangle Tireworld highlights a semantic constraint imposed by Bayesian policy
inference: unlike classical MDP planning, which is invariant to affine reward
scaling, the posterior depends on return magnitudes, so the method works best
when reward scale meaningfully encodes the strength of preferences/regrets
rather than merely ranking policies.

\subsection{Academic Advising}

Academic Advising models a student choosing which courses to take over a
sequence of semesters in order to complete a curriculum. Academic
Advising represents a large combinatorial planning problem with
long horizons and delayed rewards. The branching action space
and stochastic course outcomes create highly multimodal
trajectory returns, providing a test of scalability and behavior
under complex long-term dependencies.  At each semester the
action is to enroll in up to a maximum course load of currently
eligible courses. Course outcomes are stochastic: each enrolled
course is passed with a given probability (otherwise it remains
incomplete and can be retaken later), and passed courses persist
until curriculum completion, at which point the process enters an
absorbing goal state. The reward is specified as
step costs: taking a course incurs -1, retaking a previously
attempted course incurs -2, and if the program is not yet
complete the agent also incurs an additional -5 at every step,
encouraging completion before rollout truncation. In the IPC
2014-derived instances we use, the curriculum contains 10--30
courses; easier (odd-numbered) instances restrict the course
load to at most 1 course per step, while harder (even-numbered)
instances allow up to 2 courses per step. 

\begin{table}
\centering
\caption{Academic Advising: policy statistics}
\label{tab:aa}
\begin{tabular}{r l | c c c c c }
\textbf{\#} & \textbf{policy} & \textbf{expected} &  \multicolumn{2}{c}{\textbf{0.05}}     &  \multicolumn{2}{c}{\textbf{0.95}}    \\
           &                 &   \textbf{return}    &\textbf{quantile} & \textbf{tail mean} & \textbf{quantile} & \textbf{tail mean} \\ \hline
  \raisebox{-0.5\normalbaselineskip}[0pt][0pt]{ 1} & VSMC & $-65.3 \pm 1.3$ & $-141.1 \pm 3.6$ & $-175.7 \pm 5.0$ & $-20.4 \pm 0.5$ & $-18.4 \pm 0.4$\\
    & SAC & $-48.6 \pm 0.8$ & $-84.0 \pm 2.8$ & $-97.1 \pm 2.6$ & $-24.8 \pm 1.2$ & $-21.9 \pm 0.9$ \\
\hline
  \raisebox{-0.5\normalbaselineskip}[0pt][0pt]{ 2} & VSMC & $-98.5 \pm 1.8$ & $-184.2 \pm 5.2$ & $-222.1 \pm 7.1$ & $-45.0 \pm 1.2$ & $-39.5 \pm 1.0$\\
    & SAC & $-106.7 \pm 3.6$ & $-184.6 \pm 8.2$ & $-216.1 \pm 11.1$ & $-52.8 \pm 1.4$ & $-45.7 \pm 1.2$ \\
\hline
  \raisebox{-0.5\normalbaselineskip}[0pt][0pt]{ 3} & VSMC & $-86.7 \pm 1.1$ & $-174.1 \pm 3.0$ & $-207.5 \pm 3.4$ & $-32.6 \pm 0.2$ & $-28.1 \pm 0.5$\\
    & SAC & $-84.4 \pm 2.2$ & $-147.7 \pm 4.5$ & $-174.6 \pm 5.3$ & $-39.0 \pm 1.6$ & $-34.2 \pm 0.7$ \\
\end{tabular}
\end{table}

\begin{figure}
\centering
\begin{subfigure}[t]{0.32\textwidth}
  \centering
  \includegraphics[width=0.8\linewidth]{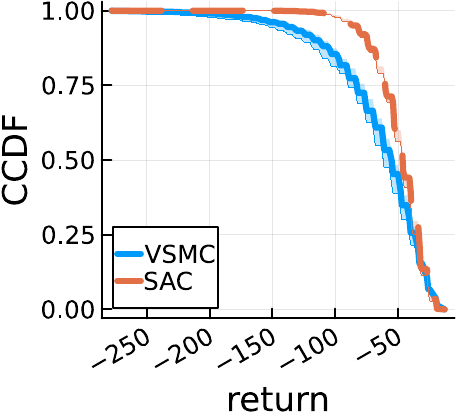}
  \caption{instance 1}
  \label{fig:aa-1}
\end{subfigure}
\begin{subfigure}[t]{0.32\textwidth}
  \centering
  \includegraphics[width=0.8\linewidth]{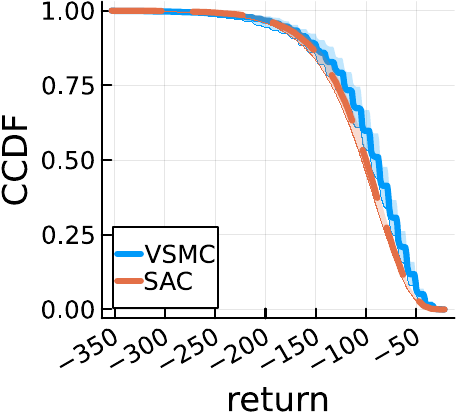}
  \caption{instance 2}
  \label{fig:aa-2}
\end{subfigure}
\begin{subfigure}[t]{0.32\textwidth}
  \centering
  \includegraphics[width=0.8\linewidth]{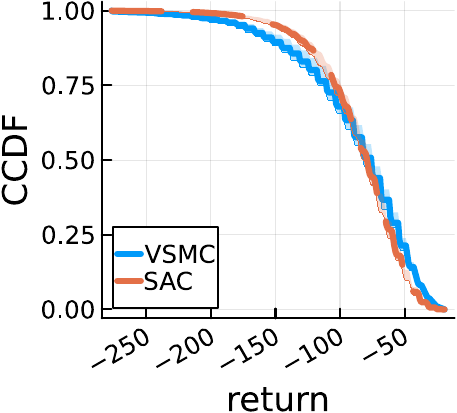}
  \caption{instance 3}
  \label{fig:aa-3}
\end{subfigure}
\caption{Academic Advising: trajectory return distributions}
\label{fig:aa}
\end{figure}
Without a non-trivial baseline policy or a domain-specific heuristic, SAC and
VSMC reliably find policies with a non-negligible probability of completing the
program for instances 1--3. For harder instances, either the variation between
runs is very high, or the policy / policy distribution concentrates around a
random walk that minimizes the per-step cost but does not lead to program
completion. Table~\ref{tab:aa} summarizes policy statistics and
Figure~\ref{fig:aa} shows trajectory return distributions for instances 1--3.
The comparison is most informative at the level of return distributions rather
than mean return alone: VSMC and SAC often achieve comparable average
performance, but VSMC induces heavier-tailed trajectory return distributions,
as manifested by the 0.05 and 0.95 quantiles and their conditional tail means.
This is consistent with the broader claim of the paper that the main difference
lies in how the two objectives represent and act under trajectory uncertainty,
not in uniform reward dominance.

The Academic Advising results demonstrate that the differences
between the two approaches persist in larger combinatorial
settings, where long horizons and stochastic outcomes amplify
differences in how trajectory uncertainty is represented.

\section{Discussion}

We view episodic MDP planning as posterior inference over policies,
with expected return defining an unnormalized posterior density
(Eq.~\eqref{eqn:logp_pi}). The resulting posterior concentrates on
return-maximizing policies, while its dispersion captures uncertainty
about optimal behavior.

\paragraph{Sources of uncertainty.}
With a posterior over deterministic policies, three sources of
uncertainty are disentangled:
\begin{enumerate}[label=(\roman*)]
\item \textbf{aleatoric} transition randomness, sampled forward and
appearing as noise in the Monte Carlo estimate of policy log-probability
(Eq.~\eqref{eqn:hat_logp_pi});
\item \textbf{epistemic} uncertainty over optimal behavior, represented
by posterior dispersion; and
\item \textbf{execution-time stochasticity}, obtained by marginalizing
over deterministic policies. In exact form, posterior-predictive control
is therefore a structured form of Thompson sampling: actions randomize
only to the extent that multiple deterministic behaviors remain
plausible.
\end{enumerate}

\paragraph{Inference mechanics.}
Once planning is cast as inference with an intractable target density,
the sequential structure of returns makes SMC a natural fit, and VSMC
provides a principled variational objective. Policy inference, however,
requires two adaptations: enforcing policy consistency under revisitation
(by sampling each state's action only on first visit) and coupling
transition randomness across particles within a sweep so that weights
reflect policy differences rather than independent simulator noise.

\paragraph{Relation to control-as-inference and entropy-regularized RL.}
Compared to trajectory-centric control-as-inference formulations, our
latent variable is the policy itself. Compared to
entropy-regularized RL, stochasticity reflects posterior uncertainty
over deterministic behaviors rather than an explicit entropy preference.

\paragraph{Empirical takeaways.}
Across domains, the comparison to SAC should be read primarily as a
comparison of objectives and the behaviors they induce, rather than as a
claim that policy VSMC is a uniformly stronger optimizer of expected
reward. The worked examples show that policy-posterior inference and
entropy-regularized optimization can coincide in simple one-step settings,
but need not coincide when stochastic continuation or state revisitation
makes policy-level commitment differ from action-level resampling. The
empirical results are consistent with this distinction, while also
reflecting the effects of approximation, optimization, and reward scale. In
grid worlds, the variationally approximated induced policy avoids
boundary-directed actions that can increase entropy under SAC while
reducing goal reachability. This illustrates the basic mechanism behind
the empirical differences: entropy-regularized RL randomizes at the
action level, whereas policy inference places posterior mass on complete
deterministic policies whose repeated statewise choices must themselves
be plausible under the return model. In Blackjack, matching VSMC-like behavior
requires substantially weaker entropy regularization in SAC, suggesting
that the default entropy-regularized objective occupies a different
operating point. In Triangle Tireworld, the sensitivity of VSMC to reward
scale reveals a substantive property of the Bayesian formulation: return
magnitudes control posterior concentration. In Academic Advising, the main
contrast is distributional: both methods can achieve similar mean
performance on some instances, but they differ in tail behavior and
variability. Taken together, these results support the narrower conclusion
that policy inference and entropy-regularized optimization induce
qualitatively different behaviors even when aggregate returns are similar,
rather than a uniform dominance claim for either method.

\paragraph{Scalability considerations.}
The method inherits the usual scaling of particle inference: runtime grows with
the rollout horizon and the number of particles. The additional bookkeeping for
policy and transition consistency adds memory proportional to the number of
distinct state--action--visit-count queries encountered during a sweep. This
overhead is modest in the discrete domains considered here, where revisitation
occurs, but may become relevant in very large state spaces or domains with
little revisitation. This cost is distinct from resampling itself: prior VSMC
results~\citep{NLR+18} show that resampling can make better use of small
particle budgets than IWAE-style variational importance sampling. Overall,
scalability is governed primarily by the standard tradeoffs among particle
count, rollout length, and effective state-space size.

\paragraph{Scope and extensions.}
We focus on discrete state spaces to make revisit bookkeeping and shared
transition caching explicit. Extensions beyond the finite/discrete setting
require specifying a proper reference measure on the chosen measurable
policy class; bounded finite-horizon rewards then guarantee a finite
normalizing constant. For infinite action spaces, this reference measure
should be regarded as part of the action-space geometry needed to interpret
rewards as log-preference densities over policies. Subject to this
qualification, determinism can be enforced via the policy representation or
a hashable state abstraction, and shared stochasticity can be implemented via common random numbers or
keyed random streams. Finally, strict memoization can be relaxed via
\emph{stochastic memoization}, allowing occasional resampling on revisits
to reduce the brittleness of committing to an early no-op action.

\bibliographystyle{plainnat}
\bibliography{refs}
\newpage

\appendix

\section*{Algorithmic Details}

\begin{algorithm}[H]
\caption{VSMC sweep for deterministic-policy inference. The procedure returns the surrogate
objective in Eq.~\eqref{eqn:det-surrogate-objective-stratified}. Resampling is shown in the
canonical SMC form; the implementation resamples adaptively (ESS threshold of $0.5$).}
\label{alg:detvsmc}
\begin{algorithmic}[1]
\Procedure{\textsc{PolicyVSMC}}{$s_1$, \textsc{Step}, \textsc{Reward}, $H$, $N$, $q_\theta$}
    \State $M_T \gets \varnothing$
    \For{$i\gets 1$ to $N$}
        \State $s^{(i)} \gets s_1$
        \State $M_A^{(i)} \gets \varnothing$ \Comment action memoization: $s\mapsto a$
        \State $M_C^{(i)} \gets \varnothing$ \Comment counts: $(s,a)\mapsto k$
    \EndFor
    \State $\ell_{1:H} \gets 0$ \Comment per-step log mean weight increments
    \State $g_{1:H} \gets 0$ \Comment per-step proposal log-prob terms (first-visit only)

    \For{$t\gets 1$ to $H$}
        \For{$i\gets 1$ to $N$}
            \State $s \gets s^{(i)}$
            \If{$s \in \dom M_A^{(i)}$} \Comment revisit: reuse memoized action
                \State $a \gets M_A^{(i)}(s)$
                \State $\log p_a \gets 0,\;\log q_a \gets 0$
            \Else \Comment first visit: sample and memoize
                \State $a \sim q_\theta(\cdot\mid s)$,\; $M_A^{(i)}(s) \gets a$
                \State $\log p_a \gets -\log|A|$,\; $\log q_a \gets \log q_\theta(a\mid s)$
                \State $g_t \gets g_t + \log q_a$
            \EndIf

            \State $k \gets M_C^{(i)}(s,a) + 1$ \Comment default $M_C^{(i)}(s,a)=0$ if absent
            \State $M_C^{(i)}(s,a) \gets k$

            \If{$(s,a,k)\in \dom M_T$} \Comment coupled transition randomness
                \State $s' \gets M_T(s,a,k)$
            \Else
                \State $s' \gets \textsc{Step}(s,a)$
                \State $M_T(s,a,k) \gets s'$
            \EndIf

            \State $w^{(i)} \gets \textsc{Reward}(s,a,s') + \log p_a - \log q_a$
            \State $s^{(i)} \gets s'$
        \EndFor

        \State $\ell_t \gets \log \sum_{i=1}^N \exp(w^{(i)}) - \log N$
        \State $(s^{(1:N)}, M_A^{(1:N)}, M_C^{(1:N)}) \gets \textsc{Resample}\big((s^{(1:N)}, M_A^{(1:N)}, M_C^{(1:N)}),\, w^{(1:N)}\big)$
    \EndFor

    \State $\log \hat Z \gets \sum_{t=1}^H \ell_t$
    \State $\log \hat Z_{H+1} \gets 0$
    \For{$t\gets H$ down to $1$}
        \State $\log \hat Z_t \gets \ell_t + \log \hat Z_{t+1}$ \Comment suffix sums for Eq.~\eqref{eqn:det-surrogate-objective-stratified}
    \EndFor

    \State \Return $\log \hat Z + \sum_{t=1}^H \left(\overline{\log \hat Z_t}\cdot g_t\right)$
\EndProcedure
\end{algorithmic}
\end{algorithm}

\end{document}